\newcommand{\STATEWITHCOMMENT}[2]{\STATE{\makebox[\widthof{bla bla bla bla bla bla bla}][l]{#1}$\triangleright$ #2}}
\newcounter{mathcounter_d}
\newcounter{mathcounter_l}
\newcounter{mathcounter_c}
\newcounter{mathcounter_t}
\newenvironment{lemma}{\refstepcounter{mathcounter_l} \begin{trivlist} \item[\hskip \labelsep {\bfseries Lemma \arabic{mathcounter_l}.\enspace}] \it}{\end{trivlist}}
\newenvironment{theorem}{\refstepcounter{mathcounter_t} \begin{trivlist} \item[\hskip \labelsep {\bfseries Theorem \arabic{mathcounter_t}.\enspace}]}{\end{trivlist}}
\newcommand{\qed}{\nobreak \ifvmode \relax \else \ifdim\lastskip<1.5em \hskip-\lastskip \hskip1.5em plus0em minus0.5em \fi \nobreak \vrule height0.4em width0.5em depth0.25em\fi}
\newenvironment{proof}[1][Proof]{\begin{trivlist} \item[\hskip \labelsep {\bfseries #1}]}{\hfill\qed\end{trivlist}}
\newcommand{\N}{\mathbb{N}}
\newcommand{\R}{\mathbb{R}}
\newcommand{\Normal}{\mathcal{N}}
\newcommand{\Expectation}{\mathbb{E}}
\newcommand{\bmat}{\begin{pmatrix}}
\newcommand{\emat}{\end{pmatrix}}
\begin{document}

\title{The (1+1)-ES Reliably Overcomes Saddle Points}
\author{Tobias Glasmachers\\
		Institute for Neural Computation\\
		Department for Computer Science\\
		Ruhr-University Bochum, Germany\\
		\texttt{tobias.glasmachers@ini.rub.de}}
\date{}

\maketitle

\begin{abstract}
It is known that step size adaptive evolution strategies (ES) do not
converge (prematurely) to regular points of continuously differentiable
objective functions. Among critical points, convergence to minima is
desired, and convergence to maxima is easy to exclude. However,
surprisingly little is known on whether ES can get stuck at a saddle
point. In this work we establish that even the simple (1+1)-ES reliably
overcomes most saddle points under quite mild regularity conditions. Our
analysis is based on drift with tail bounds. It is non-standard in that
we do not even aim to estimate hitting times based on drift. Rather, in
our case it suffices to show that the relevant time is finite with full
probability.
\end{abstract}

\section{Introduction}

\begin{tabular}{lll}
\begin{minipage}{0.48\textwidth}
The question how optimization algorithms handle saddle points is a
classic subject. In the standard analysis of gradient-based
optimization, it is easy to rule out premature convergence to a regular
point. In contrast, excluding convergence to saddle points requires
considerable effort~\cite{dauphin2014identifying}.

In evolutionary computation, the situation is no different. Akimoto et
al.\ \cite{akimoto2010theoretical} established that many optimizers
cannot converge to a regular point of the objective function under the
rather basic assumption that they successfully diverge on a linear
slope.
\end{minipage}
&
\begin{minipage}{0.04\textwidth}
\end{minipage}
&
\begin{minipage}{0.48\textwidth}
\begin{center}
	\vspace*{-1cm}
	\includegraphics[width=0.9\textwidth]{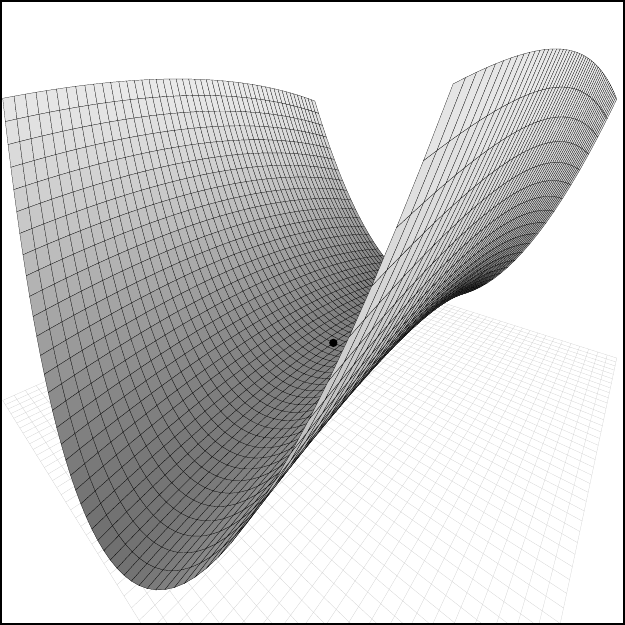}\\
	{\footnotesize \textbf{Fig.~1\setcounter{figure}{1}} Graph of a difficult saddle point.}
\end{center}
\end{minipage}
\end{tabular}\\

Prior work on the behavior of evolution strategies in the presence of a
saddle point seems to be sparse. We need to highlight that usually in
optimization the goal is \emph{not} to get stuck at a saddle point, but
rather to proceed to a (local) optimum. This is different from the goal
of locating saddle points by means of optimization techniques, in cases
where these saddles are of interest by
themselves~\cite{akimoto2021saddle}. That line of work on ``saddle point
optimization'', also called min-max-problems, is unrelated to our
research question.

In our own prior work \cite{glasmachers2020global}, we conducted a
detailed analysis of conditions under which convergence of the (1+1)-ES
to the global optimum can be guaranteed, on an extremely wide class of
functions. In that work, premature convergence to saddle points can only
be excluded if the success probability in the saddle point exceeds the
target success rate of $1/5$ in the limit of small step sizes. On the
other hand, for some extremely deceptive saddle points of sharp ridges,
a positive probability for premature convergence is proven.

There is a considerable gap between the two cases. While existing
guarantees do not apply to these cases, empirical evidence
indicates---maybe surprisingly---that already the simple (1+1)-ES
reliably overcomes even extremely ill-conditioned saddle points. In the
present paper we close this gap by cementing the empirical evidence with
a proof.

\begin{algorithm}[H]
\caption{(1+1)-ES with $1/5$-success rule}
\label{algo}
\begin{algorithmic}[1]
\STATE{\textbf{input} $m_0 \in \mathbb{R}^d$, $\sigma_0 > 0$, $f: \R^d \to \R$}, \textbf{parameter} $\alpha > 1$
\FOR {$t = 1,2,\dots$, \textit{until stopping criterion is met}}
	\STATE {sample $x_t \sim \Normal(m_t, \sigma_t^2 I)$}
	\IF {$f\big(x_t\big) \leq f\big(m_t\big)$}  
		\STATEWITHCOMMENT{$m_{t+1} \leftarrow x_t$}{move to the better solution}
		\STATEWITHCOMMENT{$\sigma_{t+1} \leftarrow \sigma_t \cdot \alpha$}{increase the step size}
	\ELSE 
		\STATEWITHCOMMENT{$m_{t+1} \leftarrow m_t$}{stay where we are}
		\STATEWITHCOMMENT{$\sigma_{t+1} \leftarrow \sigma_t \cdot \alpha^{-1/4}$}{decrease the step size}
	\ENDIF
\ENDFOR
\end{algorithmic}
\end{algorithm}

We consider the (1+1)-ES as specified in Algorithm~\ref{algo}. This
version of the method can be attributed to Kern et al.\ \cite{kern:2004}.
It was used in the recent analysis \cite{akimoto2018drift}.
For a given algorithm state $(m, \sigma)$, we define the success
probability $p_\text{succ}(m, \sigma) = \Pr\big(f(x) \leq f(m)\big)$.
It plays a key role for analyzing step size adaptation in the (1+1)-ES.

\section{Saddle Points}

In the following, we define various types of critical points of a
continuously differentiable objective function $f : \R^d \to \R$.
A point $x^* \in \R^d$ is called \emph{critical} if $\nabla f(x^*) = 0$,
and \emph{regular} otherwise. A critical point is a \emph{local
minimum/maximum} if there exists $r > 0$ such that it is minimal/maximal
within an open ball $B(x^*, r)$. If $x^*$ is critical but neither
(locally) minimal nor maximal, then it is a \emph{saddle point}.

If $f$ is twice continuously differentiable then most critical points
are well characterized by their second order Taylor expansion
$$ f(x) = f(x^*) + (x - x^*)^T H (x - x^*) + o(\|x - x^*\|^2) \, . $$
The eigenvalues of the Hessian $H$ determine its type: if all
eigenvalues are positive/negative then it is a minimum/maximum. If both
positive and negative eigenvalues exist then it is a saddle point. Zero
eigenvalues are not informative, since the behavior of the function in
the corresponding eigenspaces is governed by higher order terms.%
\footnote{
  It should be noted that a few interesting cases exist for zero
  eigenvalues (which should be improbable in practice), like the
  ``Monkey saddle'' $f(x) = x_1^3 - 3 x_1 x_2^2$. We believe that this
  case can be analyzed with the same techniques as developed below, but
  it is outside the scope of this paper.
}

Therefore, a prototypical problem exhibiting a saddle point is the
family of objective functions
$$ f_a(x) = \sum_{i=1}^d a_i x_i^2 $$
with parameter $a \in \R^d$. We assume that there exists
$b \in \{1, \dots, d-1\}$ such that $a_i < 0$ for all $i \leq b$ and
$a_i > 0$ for all $i > b$. In all cases, the origin $x^* = 0$ is a
saddle point. The eigenvalues of the Hessian are the parameters $a_i$.
Therefore, every saddle point of a twice continuously differentiable
function with non-zero eigen values of the Hessian is well approximated
by an instance of $f_a$ after applying translation and rotation
operations, to which the (1+1)-ES is invariant. This is why analyzing
the (1+1)-ES on $f_a$ covers an extremely general case.%

We observe that $f_a$ is scale invariant, see also
Figure~\ref{figure:f_a}: $f_a(c \cdot x) = c^2 \cdot f_a(x)$ holds, and
hence $f_a(x) < f_a(x') \Leftrightarrow f_a(c \cdot x) < f(c \cdot x')$
for all $x, x' \in \R^d$ and $c > 0$. This means that level sets look
the same on all scales, i.e., they are scaled versions of each other.
Also, the $f$-ranking of two points $x, x' \in \R^d$ agrees with the
ranking of the $c \cdot x$ versus $c \cdot x'$.

Related to the structure of $f_a$ we define the following notation. For
$x \in \R^d$ we define $x_-, x_+ \in \R^d$ as the projections of $x$
onto the first $b$ components and onto the last $d-b$ components,
respectively. To be precise, we have $(x_-)_i = x_i$ for
$i \in \{1, \dots, b\}$ and $(x_+)_i = x_i$ for
$i \in \{b+1, \dots, d\}$, while the remaining components of both
vectors are zero. We obtain $x = x_- + x_+$.

\begin{figure}[t]
\begin{center}
	\includegraphics[width=0.3\textwidth]{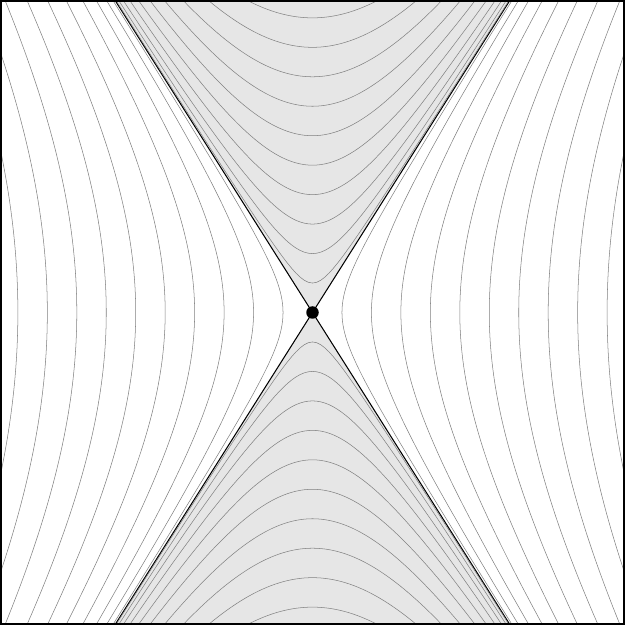}~\includegraphics[width=0.3\textwidth]{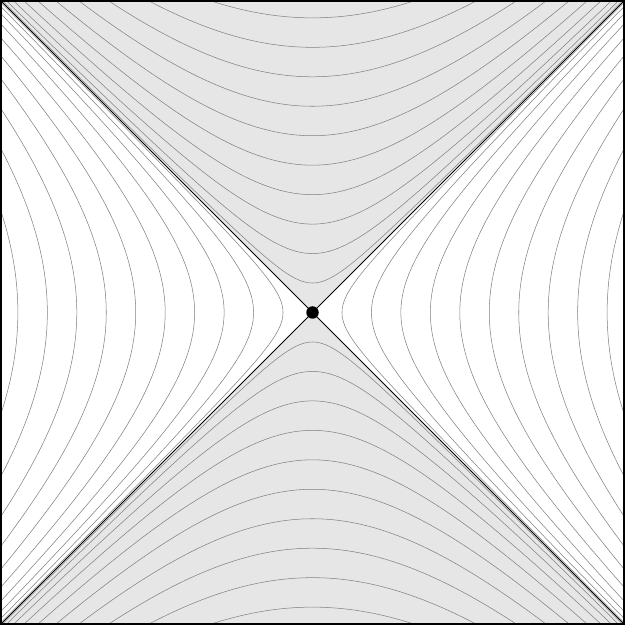}~\includegraphics[width=0.3\textwidth]{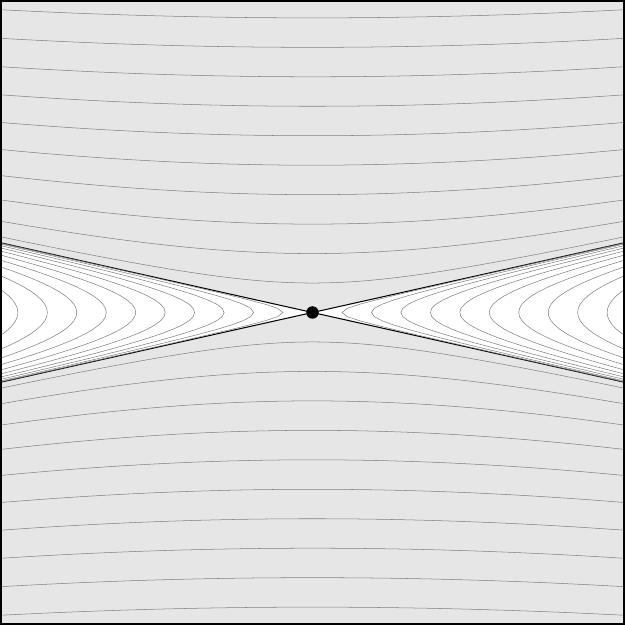}
	\caption{
		Level sets of different instances of $f_a$ for $a = (-4, 1)$
		(left), $a = (-1, 1)$ (middle), and $a = (-1, 20)$ (right),
		centered onto the saddle point. The scale of the axes is
		irrelevant since the problem is scale-invariant. The shaded
		areas correspond to positive function values. Problem
		difficulty increases from left to right, since the probability
		of sampling a ``white'' point (negative function value) in the
		vicinity of the saddle point shrinks.
		\label{figure:f_a}
	}
\end{center}
\end{figure}

For the two-dimensional case, three instances are plotted in
Figure~\ref{figure:f_a}.
The parameter $a$ controls the difficulty of the problem. The success
probability of the (1+1)-ES at the saddle point $m = 0$ equals
$p_\text{succ}(0, \sigma) = \cot^{-1}(\sqrt{|a_2/a_1|})$, which decays
to zero for $|a_2| \gg |a_1|$. This is a potentially fatal problem for
the (1+1)-ES, since it may keep shrinking its step size and converge
prematurely~\cite{glasmachers2020global}.

The contribution of this paper is to prove that we do not need to worry
about this problem. More technically precise, we aim to establish the
following theorem:

\begin{theorem}
\label{theorem:saddle}
Consider the sequence of states $(m_t, \sigma_t)_{t \in \N}$ of the
(1+1)-ES on the function $f_a$. Then, with full probability, there
exists $T \in \N$ such that for all $t \geq T$ it holds $f_a(m_t) < 0$.
\end{theorem}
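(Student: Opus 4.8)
The plan is to exploit the scale invariance of $f_a$ to reduce the analysis to the behavior of a single scalar quantity, the "signed aspect ratio"
\[
  r_t = \frac{\|(m_t)_-\|}{\|(m_t)_+\|} \in [0, \infty],
\]
together with a suitably normalized step size $\tilde\sigma_t = \sigma_t / \|m_t\|$. By scale invariance, the one-step transition of the pair $(r_t, \tilde\sigma_t)$ does not depend on the scale $\|m_t\|$, so $(r_t, \tilde\sigma_t)$ is a Markov chain on its own. The target event $f_a(m_t) < 0$ is equivalent to $\sum_{i \le b} |a_i| (m_t)_i^2 > \sum_{i > b} a_i (m_t)_i^2$, which (up to the fixed constants $a_i$) is a lower bound on $r_t$; thus it suffices to show that with full probability $r_t$ eventually exceeds a fixed threshold and, crucially, \emph{stays above it forever}. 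The key structural observation is that once $f_a(m_t) < 0$, a parent at negative function value can only be replaced by a child with an even more negative value, so the sign of $f_a(m_t)$ is \emph{absorbing}: the sequence $(f_a(m_t))_{t}$ is nonincreasing along accepted steps, hence once negative it is negative forever. Therefore the theorem reduces to: with full probability there exists a time $t$ with $f_a(m_t) < 0$.

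To prove that $f_a(m_t)$ eventually becomes negative, I would run a drift-with-tail-bounds argument on $r_t$ — but, as the abstract emphasizes, \emph{not} to estimate a hitting time, only to show the hitting time is almost surely finite. First I would establish that the normalized step size $\tilde\sigma_t$ cannot get stuck at $0$ or $\infty$: if $\tilde\sigma_t$ were extremely small, the success probability approaches $p_{\text{succ}}(0,\sigma)$, the saddle-point value, but on $f_a$ the conditional distribution of the accepted offspring then has a systematic drift that \emph{increases} $\|(m_t)_-\|$ relative to $\|(m_t)_+\|$ — geometrically, successful steps from a point very close to the saddle tend to move "outward" along the negative-curvature directions. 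This is the mechanism by which the ES escapes, and it should be formalized as a positive lower bound on $\Expectation[\log r_{t+1} - \log r_t \mid \text{state}]$ that holds uniformly whenever $r_t$ is below the threshold and $\tilde\sigma_t$ lies in a "reasonable" range. Next, I would handle the complementary regimes: when $\tilde\sigma_t$ is large the dynamics are dominated by the far-field behavior of $f_a$, where the negative-curvature directions dominate the value even more, so progress toward $f_a < 0$ is only easier; when $\tilde\sigma_t$ is moderate one combines the two effects. A Foster–Lyapunov / supermartingale argument on a potential of the form $\phi(r_t, \tilde\sigma_t)$ then yields that the set $\{f_a(m_t) < 0\}$ is reached with probability $1$ from any starting state (using a standard recurrence criterion plus the fact that the "good" set, once entered, is absorbing).

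The main obstacle, I expect, is the regime where $r_t$ is small \emph{and} $\tilde\sigma_t$ is small — the genuinely deceptive near-saddle configurations of a sharp ridge, where $p_{\text{succ}}$ is tiny and the ES shrinks its step size aggressively. Here two things must be shown simultaneously: (i) the step size does not collapse so fast that the total "outward" progress along the negative-curvature directions is a convergent (and possibly too small) sum — i.e. $\sum_t \sigma_t < \infty$ with the parent trapped near the saddle must be excluded with full probability — and (ii) the angular drift in $\log r_t$ is bounded below by a quantity that does not vanish as $\tilde\sigma_t \to 0$. Point (i) is the delicate one: it is essentially the statement that the $1/5$ rule, even when the local success probability is far below $1/5$, cannot drive $\sigma_t$ to zero fast enough to freeze the mean, because every \emph{accepted} step inflates $\sigma$ by the fixed factor $\alpha$ and simultaneously moves $m$ a macroscopic relative distance. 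I would control this with a tail bound: conditioned on the state, the number of steps until the next acceptance has a geometric-type tail, and each acceptance contributes a bounded-below increment to $\log r_t$ while only boundedly perturbing $\log \tilde\sigma_t$; summing these via a Borel–Cantelli / optional-stopping argument forces $r_t$ above the threshold almost surely. Once that is in place, combining it with the absorption property of $\{f_a(m_t)<0\}$ completes the proof.
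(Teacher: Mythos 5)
Your overall architecture matches the paper's: reduce to a scale-invariant two-dimensional state (an aspect ratio of the mean plus a normalized step size), show the ratio drifts upward, show the step size cannot collapse, combine the two into one Lyapunov function, invoke a drift theorem with tail bounds to get an almost surely finite hitting time, and use elitism to make $\{f_a(m_t)<0\}$ absorbing. However, the proposal leaves the central technical step unproved and, in the hard regime, asserts two claims that are false as stated. The heart of the matter is your sentence that ``successful steps from a point very close to the saddle tend to move outward along the negative-curvature directions'' --- this is precisely the theorem in miniature, and it is not obvious: an \emph{individual} accepted step can perfectly well decrease the ratio $\|m_-\|/\|m_+\|$ (e.g.\ for $a=(-1,1)$, $m=(0.1,1)$, the offspring $(0.05,0.9)$ is accepted yet has a smaller ratio). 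Only the \emph{expected} change is positive, and the paper needs a genuine argument for this: it decomposes the sampling distribution into spheres, pairs each ratio-decreasing successful point with its mirror image across the axis through $m$ in the negative-curvature subspace, and uses $\|z\|_+\le 1$ together with the triangle inequality of the Mahalanobis semi-norm $\|\cdot\|_-$ to show each pair contributes non-negatively. Note also that this pairing works in the Mahalanobis geometry where $f_a(x)=\|x\|_+^2-\|x\|_-^2$; your Euclidean ratio $\|(m_t)_-\|/\|(m_t)_+\|$ and normalization by $\|m_t\|$ would obscure this symmetry (the paper normalizes by $\|m\|_+$ and uses $W=\|m\|_-/\|m\|_+$ in the semi-norms).

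The second problem is your point (ii): the drift of $\log r_t$ is \emph{not} bounded below by a quantity independent of $\tilde\sigma_t$ as $\tilde\sigma_t\to 0$; a step of relative size $\tilde\sigma_t$ changes the ratio by $\Order(\tilde\sigma_t)$, so the per-step expected gain vanishes. Likewise ``each acceptance contributes a bounded-below increment to $\log r_t$'' fails both in sign (see above) and in magnitude. The correct mechanism in this regime --- and the reason the paper's Lemma~\ref{lemma:40-percent-success-rate} exists --- is that for small normalized step size the success probability tends to $1/2$, hence exceeds $2/5$, so the $1/5$-rule gives $\log\tilde\sigma$ a strictly positive drift; the two potentials are then combined as $\Phi=\beta\log\tilde\sigma + W$ with $\beta$ chosen so that the step-size drift carries $\Phi$ where the ratio drift is too weak, and vice versa. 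Your proposed potential $\phi(r_t,\tilde\sigma_t)$ could in principle play the same role, but as written the small-$\tilde\sigma$ case of your argument does not go through, and without it the premature-convergence scenario (step size and progress both collapsing) is not excluded.
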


It ensures that the (1+1)-ES surpasses the saddle point with full
probability in finite time (iteration $T$). This implies in particular
that the saddle point is not a limit point of the sequence
$(m_t)_{t \in \N}$ (see also Lemma~\ref{lemma:negative} below).

\section{Preliminaries}

In this section, we prepare definitions and establish auxiliary results.
We start by defining the following sets:
$D_a^- = f_a^{-1}(\R_{<0})$, $D_a^0 = f_a^{-1}(\{0\})$, and
$D_a^+ = f_a^{-1}(\R_{>0})$. They form a partition of the search
space~$\R^d$.

For a vector $x \in \R^d$ we define the semi-norms
$$ \|x\|_- = \sqrt{-\sum_{i=1}^b a_i x_i^2} \qquad \text{and} \qquad \|x\|_+ = \sqrt{\sum_{i=b+1}^d a_i x_i^2} \, . $$
The two semi-norms are Mahalanobis norms in the subspaces spanned by
eigenvectors with negative and positive eigenvalues of the Hessian of
$f_a$, respectively, when interpreting the Hessian with negative
eigenvalues flipped to positive as an inverse covariance matrix.
In other words, $f_a(x) = \|x\|_+^2 - \|x\|_-^2$ holds. Furthermore, we
have $\|x_+\|_+ = \|x\|_+$, $\|x_-\|_- = \|x\|_-$, $\|x_-\|_+ = 0$, and
$\|x_+\|_- = 0$.

In the following, we exploit scale invariance of $f_a$ by analyzing the
stochastic process $(m_t, \sigma_t)$ in a normalized state space. We map
a state to the corresponding normalized state by
$$ (m, \sigma) \mapsto \left( \frac{m}{\|m\|_+}, \frac{\sigma}{\|m\|_+} \right) = (\tilde m, \tilde \sigma) \, . $$
This normalization is different from the normalizations $m/\sigma$ and
$m/(d \sigma)$, which give rise to a scale-invariant process when
minimizing the Sphere function \cite{akimoto2018drift}. The
different normalization reflects the quite different dynamics of the
(1+1)-ES on~$f_a$.

We are particularly interested in the case $m \in D_a^+$, since we need
to exclude the case that the (1+1)-ES stays in that set indefinitely.
Due to scale invariance, this condition is equivalent to
$\tilde m \in D_a^+$. We define the set
$$ M = \big\{x \in \R^d \,\big|\, \|x\|_+ = 1 \big\} \, . $$
The state space for the normalized states $(\tilde m, \tilde \sigma)$
takes the form $M \times \R_{>0}$. We also define the subset
$M^+_0 = M \cap (D_a^+ \cup D_a^0)$. The reason to include the zero
level set is that closing the set makes it compact. Its boundedness can
be seen from the reformulation
$M^+_0 = \big\{m \in \R^d \,\big|\, \|m\|_+ = 1 \text{ and } \|m\|_- \leq 1 \big\}$.
In the following, compactness will turn out to be very useful,
exploiting the fact that on a compact set, every lower semi-continuous
function attains its infimum.

The success probability $p_\text{succ}(m, \sigma)$ is scale invariant,
and hence it is well-defined as a function of the normalized state
$(\tilde m, \tilde \sigma)$. It is everywhere positive. Indeed, it is
uniformly lower bounded by $p_{\min} = \min(p^*, \frac12) > 0$, where
$p^* = p_\text{succ}(0, 1)$ denotes the success probability in the
saddle point (which is independent of the step size, and depends only
on~$a$).
The following two lemmas deal with the success rate in more detail.

\begin{lemma}
\label{lemma:negative}
If there exists $T \in \N$ such that $m_T \in D_a^0 \cup D_a^-$ then
with full probability, the saddle point $0 \in \R^d$ of $f_a$ is not a
limit point of the sequence $(m_t)_{t \in \N}$.
\end{lemma}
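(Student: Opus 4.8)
The idea is to show that once $m_T \in D_a^0 \cup D_a^-$, the iterate stays in $D_a^0 \cup D_a^-$ forever, and moreover $f_a(m_t)$ cannot converge back up to $0$. First I would observe that the set $D_a^0 \cup D_a^- = f_a^{-1}(\R_{\le 0})$ is forward-invariant under the (1+1)-ES: the algorithm only accepts a new point $x_t$ if $f_a(x_t) \le f_a(m_t) \le 0$, so $f_a(m_{t+1}) \le f_a(m_t)$ for all $t \ge T$; in particular $m_t \in D_a^0 \cup D_a^-$ and the sequence $f_a(m_t)$ is non-increasing (and bounded above by $0$) from time $T$ onward. Consequently $f_a(m_t) \to L$ for some $L \in [-\infty, 0]$. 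If $L < 0$ then $\|m_t\|_+^2 - \|m_t\|_-^2 \le L/2 < 0$ for large $t$, which keeps $m_t$ bounded away from the origin (since near $0$ both semi-norms are small), so $0$ is not a limit point, and the same conclusion holds trivially if $L = -\infty$. The only remaining case is $L = 0$, i.e.\ $f_a(m_t) \nearrow 0$ while staying $\le 0$.

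To rule out $L = 0$, I would argue that with full probability the process makes a ``definite'' downward step at some point after $T$. The key quantitative input is the uniform lower bound $p_{\min} = \min(p^*, \tfrac12) > 0$ on the success probability established just before the lemma. Two sub-cases: if $m_t$ stays in a bounded region away from $0$, then by a compactness/continuity argument there is a uniform $\delta > 0$ such that at each step, with probability bounded below, the accepted offspring satisfies $f_a(m_{t+1}) \le f_a(m_t) - \delta$ (a successful Gaussian step of appropriate size decreases $f_a$ by a fixed amount); since this happens infinitely often with full probability by a Borel--Cantelli / conditional-second-Borel--Cantelli argument, $f_a(m_t) \to -\infty$, contradicting $f_a(m_t) \to 0$. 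If instead $m_t \to 0$ along a subsequence, I would use scale invariance: pass to the normalized state $(\tilde m_t, \tilde\sigma_t)$, note $m_t \in D_a^-$ forces $\tilde m_t \notin M^+_0$ in a way that is quantitatively bounded, and again invoke the positive lower bound on $p_{\text{succ}}$ to produce a relative decrease of $f_a(m_t)$ by a constant factor infinitely often, which is again incompatible with $f_a(m_t) \nearrow 0$ unless $f_a(m_t) \to 0$ is impossible from below --- actually the cleanest route is: a relative decrease by a fixed factor $\rho < 1$ infinitely often forces $f_a(m_t) \to -\infty$ or $f_a(m_t)$ already $= 0$ eventually, but $f_a(m_t) = 0$ contradicts $m_t \in D_a^-$ for $t$ slightly larger than $T$ once a strict decrease has occurred.

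The main obstacle I anticipate is the borderline regime where $m_t$ approaches the origin while $f_a(m_t) \to 0^-$: here one must control the interplay between $m_t \to 0$ and $\sigma_t \to 0$, since a vanishing step size could in principle make the per-step decrease of $f_a$ vanish too fast to sum to infinity. This is exactly where the normalized process $(\tilde m, \tilde\sigma)$ and the scale invariance of both $f_a$ and $p_{\text{succ}}$ are indispensable: they convert the shrinking-scale picture into a process on the fixed compact-ish set $M \times \R_{>0}$, where the uniform bound $p_{\min}$ applies and the relative (rather than absolute) decrease of $f_a$ is what matters. I would therefore set up the argument entirely in normalized coordinates, reduce to showing that $f_a(\tilde m_t)$ cannot converge to $0$ from below, and close it by the Borel--Cantelli-type argument using $p_{\min} > 0$. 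I expect the remaining details --- choosing the event ``successful step of moderate normalized size'' and lower-bounding its probability uniformly --- to be routine given the compactness remarks already in the Preliminaries section.
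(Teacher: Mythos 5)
Your first case ($L<0$, including $L=-\infty$) is correct and matches the paper's argument: elitism makes $f_a(m_t)$ non-increasing, so once it is strictly negative it stays bounded away from $0=f_a(0)$, and the origin cannot be a limit point. The problem is your remaining case. Since $f_a(m_t)$ is non-increasing and $\leq 0$ for $t \geq T$, the event $L=0$ is \emph{not} ``$f_a(m_t)\nearrow 0$ from below''; it forces $f_a(m_t)=0$ for every $t\geq T$, i.e., the chain sits in $D_a^0$ forever without ever accepting a strictly improving offspring. Recognizing this collapses the case to a single observation, which is exactly what the paper does: for any $m\in D_a^0$ and \emph{any} $\sigma>0$, the probability of sampling a point of $D_a^-$ is at least the scale-invariant bound $p_{\min}>0$ (and such a point is accepted, since its value lies below $f_a(m_t)=0$), so the probability of never leaving $D_a^0$ within $n$ steps is at most $(1-p_{\min})^n\to 0$. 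No quantitative per-step decrease of $f_a$ is needed.

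Your proposal instead tries to force a decrease of $f_a$ by a fixed amount $\delta$ (or a fixed relative factor) infinitely often and to conclude $f_a(m_t)\to-\infty$. This is where the genuine gap lies: such a uniform $\delta$ does not exist without controlling $\sigma_t$, because if $\sigma_t\to 0$ the accepted offspring is arbitrarily close to $m_t$ and the per-step decrease vanishes. You correctly identify this obstacle, but the fix you sketch (pass to normalized coordinates and invoke $p_{\min}$) does not resolve it: keeping the normalized step size away from zero is precisely the hard content of Lemma~\ref{lemma:step-size-drift}, Lemma~\ref{lemma:mean-drift} and the main theorem, is not available at this point in the paper, and is not needed for this lemma. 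The repair is simply to abandon the quantitative-decrease strategy: you only need \emph{one} accepted offspring with $f_a<0$ ever, after which your first case applies, and that single success occurs almost surely because its per-step probability is bounded below by $p_{\min}$ uniformly in $\sigma$ (by scale invariance) and $D_a^0$ has Lebesgue measure zero, so a successful offspring lies in $D_a^-$ almost surely.
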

\begin{proof}
Due to elitism, the sequence $m_t$ can jump from $D_a^+$ to $D_a^0$ and
then to $D_a^-$, but not the other way round. In case of $m_T \in D_a^-$
all function values for $t > T$ are uniformly bounded away from zero by
$f(m_t) \leq f(m_T) < 0$. Therefore $f(m_t)$ cannot converge to zero,
and $m_t$ cannot converge to the saddle point.

Now consider the case $m_T \in D_a^0$. For all $m \in D_a^0$ and all
$\sigma > 0$, the probability of sampling an offspring in $D_a^-$ is
positive, and it is lower bounded by $p_{\min}$, which is positive and
independent of $\sigma$. Not sampling an offspring $m_t \in D_a^-$ for
$n$ iterations in a row has a probability of at most $(1 - p_{\min})^n$,
which decays to zero exponentially quickly. Therefore, with full
probability, we obtain $m_t \in D_a^-$ eventually.
\qed
\end{proof}

However, $p_{\min}$ being positive is not necessarily enough for the
(1+1)-ES to escape the saddle point, since for $p_{\min} < 1/5$ it may
stay inside of $D_a^+$, keep shrinking its step size, and converge
prematurely \cite{glasmachers2020global}. In fact, based on the choice
of the parameter $a$ of $f_a$, $p_{\min}$ can be arbitrarily small. In
the following lemma, we therefore prepare a drift argument, ensuring
that the normalized step size remains in or at least always returns to a
not too small value.

\begin{lemma}
\label{lemma:40-percent-success-rate}
There exists a constant $0 < \tilde \sigma_{\text{40\%}} \leq \infty$
such that
$p_\text{succ}(\tilde m, \tilde \sigma) \geq 2/5$ holds for all states
fulfilling $\tilde m \in M^+_0$ and
$\tilde \sigma \leq \tilde \sigma_{\text{40\%}}$.
\end{lemma}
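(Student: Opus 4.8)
The plan is to exploit that every point of $M^+_0$ is a \emph{regular} point of $f_a$: since $\nabla f_a(m) = 2(a_1 m_1, \dots, a_d m_d)$ vanishes only at $m = 0$, while $\|m\|_+ = 1$ forces $m \neq 0$. Hence, in the limit $\tilde\sigma \to 0$, the success region around $\tilde m$ degenerates to a half-space through $\tilde m$, and $p_\text{succ}(\tilde m, \tilde\sigma) \to 1/2 > 2/5$. The real work is to make this limit \emph{uniform} over the compact set $M^+_0$.

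Concretely, writing an offspring as $x = \tilde m + \tilde\sigma z$ with $z \sim \Normal(0, I)$ and expanding the quadratic, one obtains $f_a(x) - f_a(\tilde m) = \tilde\sigma\big(2 U + \tilde\sigma W\big)$ with $U := \sum_i a_i \tilde m_i z_i$ and $W := \sum_i a_i z_i^2$, so that by scale invariance $p_\text{succ}(\tilde m, \tilde\sigma) = \Pr\big(2U + \tilde\sigma W \le 0\big)$. Here the law of $W$ (mean $\sum_i a_i$, variance $2\sum_i a_i^2$) does not depend on $\tilde m$, and $U$ is centered Gaussian with variance $v(\tilde m) = \sum_i a_i^2 \tilde m_i^2$, which on $M^+_0$ is uniformly bounded below by $\min_{i>b} a_i > 0$ (a convex-combination estimate using $\sum_{i>b} a_i \tilde m_i^2 = \|\tilde m\|_+^2 = 1$). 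From the inclusion $\{U \le -c\} \cap \{\tilde\sigma W \le 2c\} \subseteq \{2U + \tilde\sigma W \le 0\}$, valid for every $c > 0$, we get $p_\text{succ}(\tilde m, \tilde\sigma) \ge \Pr(U \le -c) - \Pr(\tilde\sigma W > 2c)$. First fix $c>0$ small enough that $\Pr(U \le -c) = \Phi\!\big(-c/\sqrt{v(\tilde m)}\big) \ge \Phi\!\big(-c/\sqrt{\min_{i>b} a_i}\big) \ge 9/20$ for all $\tilde m \in M^+_0$; then, since the distribution of $W$ is fixed, pick $\tilde\sigma_{\text{40\%}} > 0$ so small (e.g.\ via Chebyshev's inequality) that $\Pr(\tilde\sigma W > 2c) \le \Pr\big(W > 2c/\tilde\sigma_{\text{40\%}}\big) \le 1/20$ for all $\tilde\sigma \le \tilde\sigma_{\text{40\%}}$. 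Combining, $p_\text{succ}(\tilde m, \tilde\sigma) \ge 9/20 - 1/20 = 2/5$. (If $p_\text{succ} \ge 2/5$ happens to hold for all $\tilde\sigma$, one takes $\tilde\sigma_{\text{40\%}} = \infty$; otherwise the finite value above works.)

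A more abstract alternative is to extend $p_\text{succ}$ to $\tilde\sigma = 0$ by $p_\text{succ}(\tilde m, 0) := 1/2$, show this extension is (lower semi-)continuous at $\tilde\sigma = 0$ by a bounded-convergence argument along sequences $(\tilde m_n, \tilde\sigma_n) \to (\tilde m^*, 0)$ (using $\Pr\big(\sum_i a_i \tilde m^*_i z_i = 0\big) = 0$ since $v(\tilde m^*) > 0$), and then invoke compactness of $M^+_0$ together with a subsequence/contradiction argument. In either version the main obstacle is identical: controlling the quadratic remainder $\tilde\sigma W$ uniformly in $\tilde m$ while keeping the leading linear term $U$ non-degenerate uniformly in $\tilde m$, and the uniform lower bound $v(\tilde m) \ge \min_{i>b} a_i$ on the compact set $M^+_0$ is precisely what makes this possible.
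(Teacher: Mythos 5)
Your proof is correct, and it takes a genuinely different route from the paper. The paper argues pointwise: for each fixed $\tilde m \in M^+_0$ it notes $\lim_{\tilde\sigma \to 0} p_\text{succ}(\tilde m, \tilde\sigma) = 1/2$, defines a pointwise critical step size $\tilde\sigma_{\text{40\%}}(\tilde m) = \arg\min_{\tilde\sigma>0}\{p_\text{succ}(\tilde m,\tilde\sigma) \leq 2/5\}$, asserts that this is a lower semi-continuous function of $\tilde m$, and extracts a positive uniform constant from compactness of $M^+_0$. You instead make the limit uniform by hand: the decomposition $f_a(\tilde m + \tilde\sigma z) - f_a(\tilde m) = \tilde\sigma(2U + \tilde\sigma W)$ with $U$ centered Gaussian of variance $v(\tilde m) = \sum_i a_i^2 \tilde m_i^2 \geq \min_{i>b} a_i$ on $M^+_0$ (the convex-combination bound using $\sum_{i>b} a_i \tilde m_i^2 = 1$ is exactly right), together with the inclusion $\{U \leq -c\} \cap \{\tilde\sigma W \leq 2c\} \subseteq \{2U + \tilde\sigma W \leq 0\}$, gives $p_\text{succ} \geq 9/20 - 1/20$ for an explicitly computable $\tilde\sigma_{\text{40\%}}$. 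What your version buys is that it is constructive and fully self-contained: it sidesteps the paper's lower semi-continuity claim (which is stated without justification and is the only delicate point of the paper's argument) and produces a quantitative threshold rather than a pure existence statement. What the paper's version buys is brevity and reuse of the same compactness template applied again in Lemmas~\ref{lemma:step-size-drift} and~\ref{lemma:mean-drift}. Your closing remark about taking $\tilde\sigma_{\text{40\%}} = \infty$ when possible correctly matches the $0 < \tilde\sigma_{\text{40\%}} \leq \infty$ range in the statement.
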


\begin{proof}
It follows immediately from the geometry of the level sets (see also
Figure~\ref{figure:f_a}) that for each fixed $\tilde m \in M^+_0$
(actually for $m \not= 0$), it holds
$$ \lim\limits_{\tilde \sigma \to 0} p_\text{succ}(\tilde m, \tilde \sigma) = \frac12
   \qquad \text{and} \qquad
   \lim\limits_{\tilde \sigma \to \infty} p_\text{succ}(\tilde m, \tilde \sigma) = p^* \, . $$
Noting that $p_\text{succ}(\tilde m, \tilde \sigma)$ is continuous
between these extremes, we define a pointwise critical step size as
$$ \tilde \sigma_{\text{40\%}}(\tilde m) = \arg\min_{\tilde \sigma > 0} \big\{ p_\text{succ}(\tilde m, \tilde \sigma) \leq 2/5 \big\} \, . $$
With the convention that $\arg\min$ over an empty set is $\infty$, this
definition makes
$\tilde \sigma_{\text{40\%}} : M^+_0 \to \R \cup \{\infty\}$ a lower
semi-continuous function. Due to compactness of $M^+_0$ it attains its
minimum $\tilde \sigma_{\text{40\%}} > 0$.
\qed
\end{proof}

\section{Drift of the Normalized State}

In this section we establish two drift arguments. They apply to the
following drift potential functions:
\begin{align*}
V   (\tilde m, \tilde \sigma) &= \log(\tilde \sigma)
\\
W   (\tilde m, \tilde \sigma) &= \|\tilde m\|_-
\\
\Phi(\tilde m, \tilde \sigma) &= \beta \cdot V(\tilde m, \tilde \sigma) + W(\tilde m, \tilde \sigma)
\end{align*}
The potentials govern the dynamics of the step size $\tilde \sigma$, of
the mean $\tilde m$, and of the combined process, namely the (1+1)-ES.
The trade-off parameter $\beta > 0$ will be determined later. Where
necessary we extend the definitions to the original state by plugging in
the normalization, e.g., resulting in
$W(m, \sigma) = \frac{\|m\|_-}{\|m\|_+}$.

For a normalized state $(\tilde m, \tilde \sigma)$ let
$(\tilde m', \tilde \sigma')$ denote the normalized successor state.
We measure the drift of all three potentials as follows:
\begin{align*}
\Delta^V   (\tilde m, \tilde \sigma) &= \Expectation \big[ V(\tilde \sigma') - V(\tilde \sigma) \big]
\\
\Delta^W   (\tilde m, \tilde \sigma) &= \Expectation \big[ \min\{W(\tilde m') - W(\tilde m), 1\} \big]
\\
\Delta^\Phi(\tilde m, \tilde \sigma) &= \beta \cdot \Delta^V(\tilde m, \tilde \sigma) + \Delta^W(\tilde m, \tilde \sigma)
\end{align*}

As soon as $W(\tilde m) > 1$, $\tilde m \in D_a^-$ holds and the
(1+1)-ES has successfully passed the saddle point according to
Lemma~\ref{lemma:negative}. Therefore we aim to show that the sequence
$W(\tilde m_t)$ keeps growing, and that is passes the threshold of one.
To this end, we will lower bound the progress $\Delta^W$ of the
truncated process.

Truncation of particularly large progress in the definition of
$\Delta^W$, i.e., $W$-progress larger than one, serves the purely
technical purpose of making drift theorems applicable. This sounds
somewhat ironic, since a progress of more than one on $W$ immediately
jumps into the set $D_a^-$ and hence passes the saddle. On the technical
side, an upper bound on single steps is a convenient prerequisite. Its
role is to avoid that the expected progress is achieved by very few
large steps while most steps make no or very litte progress, which would
make it impossible to bound the runtime based on expected progress. Less
strict conditions allowing for rare large steps are possible
\cite{hajek1982hitting,lehre2013general}. The technique of bounding the
single-step progress instead of the domain of the stochastic process was
introduced in \cite{akimoto2018drift}.

The speed of the growth of $W$ turns out to depend on $\tilde \sigma$.
In order to guarantee growth at a sufficient pace, we need to keep the
normalized step size from decaying to zero too quickly. Indeed, we will
show that the normalized step size drifts away from zero by analyzing
the step-size progress~$\Delta^V$.

The following two lemmas establish the drift of mean $\tilde m$ and step
size~$\tilde \sigma$.

\begin{lemma}
\label{lemma:step-size-drift}
Assume $\tilde m \in M^+_0$.
There exists a constant $B_1$ such that
$\Delta^V(\tilde m, \tilde \sigma) \geq B_1$ holds.
Furthermore, there exist constants $B_2 > 0$ and
$\tilde \sigma^* \in (0, \tilde \sigma_{\text{40\%}}]$ such that for all
$\tilde \sigma \leq \tilde \sigma^*$ it holds
$\Delta^V(\tilde m, \tilde \sigma) \geq B_2$.
\end{lemma}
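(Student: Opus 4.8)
The plan is to analyze the step-size dynamics by tracking how the success indicator behaves and translating it into drift of $V(\tilde m, \tilde \sigma) = \log(\tilde \sigma)$. First I would observe that in a single step, $\log(\sigma)$ changes by $\log(\alpha)$ on success and by $-\tfrac14\log(\alpha)$ on failure, so the \emph{unnormalized} step-size log-drift equals $\log(\alpha)\big(\tfrac54 p_\text{succ}(\tilde m,\tilde\sigma) - \tfrac14\big)$. However, $V$ is defined on the \emph{normalized} state, so I must also account for the change of the normalization constant $\|m\|_+$: when the step succeeds and $m$ moves to $x$, we have $\tilde\sigma' = \sigma'/\|x\|_+ = \alpha\sigma/\|x\|_+$, so $V(\tilde m',\tilde\sigma') - V(\tilde m,\tilde\sigma) = \log(\alpha) - \log(\|x\|_+/\|m\|_+)$; on failure $m$ is unchanged and the increment is simply $-\tfrac14\log(\alpha)$. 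Hence the correction term is $-\Expectation[\log(\|x\|_+/\|m\|_+)\cdot \indicator\{\text{success}\}]$, and everything reduces to controlling this expected log-ratio of the $+$-seminorm before and after an accepted step.

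For the uniform lower bound $B_1$, I would argue as follows. On failure, the increment is exactly $-\tfrac14\log(\alpha)$, a constant. On success, $x = m + \sigma\xi$ with $\xi\sim\Normal(0,I)$, and $\|x\|_+ \le \|m\|_+ + \sigma\|\xi\|_+$, so $\log(\|x\|_+/\|m\|_+) \le \log(1 + \tilde\sigma\|\xi\|_+)$. Taking expectations (conditioned on success, which only reweights by a bounded factor $1/p_\text{succ} \le 1/p_{\min}$) and using $\tilde m \in M^+_0$ so $\|m\|_+ = 1$, I get a bound of the form $\Expectation[\log(1+\tilde\sigma\|\xi\|_+)] \le \log(1+\tilde\sigma\,\Expectation\|\xi\|_+) \le \tilde\sigma\cdot c$ for a dimension-dependent constant $c$ (Jensen, then $\log(1+t)\le t$). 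The catch is that this blows up as $\tilde\sigma\to\infty$; but for large $\tilde\sigma$ I would instead bound $\log(\|x\|_+/\|m\|_+)$ by splitting: when $\tilde\sigma$ is large, the offspring distribution looks essentially like $\sigma\xi$ centered at the saddle, so $p_\text{succ}\to p^*$ and $\|x\|_+\approx\sigma\|\xi\|_+$ giving increment $\approx -\log(\|\xi\|_+/\tilde\sigma\cdot\tilde\sigma) = -\log\|\xi\|_+ + $ bounded, whose expectation is finite. More carefully, $\log(\|x\|_+/\|m\|_+) = \log\|x\|_+$ since $\|m\|_+=1$, and $\log\|m+\sigma\xi\|_+ \le \log(\|m\|_+ + \sigma\|\xi\|_+) = \log(1+\tilde\sigma\|\xi\|_+)$; the expectation $\Expectation[\log(1+\tilde\sigma\|\xi\|_+)\,\indicator\{\text{succ}\}]$ need not be uniformly bounded in $\tilde\sigma$, so the cleanest route is: the success event forces $\|x\|_+^2 = \|x\|_-^2 + f_a(x) \le \|x\|_-^2 \le (\|m\|_- + \sigma\|\xi\|_-)^2 \le (1+\tilde\sigma\|\xi\|_-)^2$ (using $\|m\|_-\le 1$ on $M^+_0$ and $f_a(x)\le f_a(m)\le 0$... wait, $f_a(m)\le 0$ only when $\tilde m\in D_a^0$, but on $D_a^+$ it is positive) — so more honestly, on success $f_a(x)\le f_a(m) = \|m\|_+^2 - \|m\|_-^2 = 1 - \|m\|_-^2 \le 1$, hence $\|x\|_+^2 \le \|x\|_-^2 + 1 \le (1+\tilde\sigma\|\xi\|_-)^2 + 1$, giving $\log\|x\|_+ \le \tfrac12\log\big((1+\tilde\sigma\|\xi\|_-)^2+1\big) \le \log(1+\tilde\sigma\|\xi\|_-) + \tfrac12\log 2$ whenever the first term dominates. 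The upshot is $\Delta^V \ge -\tfrac14\log\alpha - \tfrac{1}{p_{\min}}\Expectation[\log^+(2\tilde\sigma\|\xi\|_-)] - O(1)$; this is finite for every fixed $\tilde\sigma$ but still not obviously uniform. To get genuine uniformity I would note the expectation $\Expectation[\log(1+\tilde\sigma\|\xi\|_-)]$ grows only like $\log\tilde\sigma$ for large $\tilde\sigma$, while the acceptance constraint makes the success probability shrink like $O(1/\tilde\sigma)$ in the relevant regime only in the "hard" directions — so actually $\Expectation[\log\|x\|_+\cdot\indicator\{\text{succ}\}]$, being an expectation over the joint event, stays bounded. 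I will make this rigorous by a change of variables to polar-type coordinates in $\xi$ and bounding the integral directly; this is the main technical obstacle, and I expect it to require the observation that on the success event $x$ lies in the cone $D_a^-\cup D_a^0$ shifted by $m$, which has the geometry that tames the log-ratio.

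For the second claim, I would show that when $\tilde\sigma \le \tilde\sigma^*$ for a suitably small $\tilde\sigma^*$, the success probability is at least $2/5$ by Lemma~\ref{lemma:40-percent-success-rate} (choosing $\tilde\sigma^* \le \tilde\sigma_{\text{40\%}}$), so the unnormalized log-drift is at least $\log(\alpha)(\tfrac54\cdot\tfrac25 - \tfrac14) = \tfrac14\log\alpha > 0$; and simultaneously the normalization-correction term $\Expectation[\log(\|x\|_+/\|m\|_+)\,\indicator\{\text{succ}\}] \le \Expectation[\log(1+\tilde\sigma\|\xi\|_+)] \le \tilde\sigma\,\Expectation\|\xi\|_+ =: \tilde\sigma\cdot c_+$, which tends to $0$ as $\tilde\sigma\to 0$. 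Therefore, choosing $\tilde\sigma^*$ small enough that $\tilde\sigma^* c_+ \le \tfrac18\log\alpha$ gives $\Delta^V \ge \tfrac14\log\alpha - \tfrac18\log\alpha = \tfrac18\log\alpha =: B_2 > 0$, and shrinking $\tilde\sigma^*$ further if needed so that $\tilde\sigma^* \le \tilde\sigma_{\text{40\%}}$. The hard part overall is the uniform bound $B_1$ in the regime of large $\tilde\sigma$; the small-$\tilde\sigma$ bound $B_2$ is comparatively routine once Lemma~\ref{lemma:40-percent-success-rate} is in hand, since there the correction term is negligible and the $2/5$ success rate does the work.
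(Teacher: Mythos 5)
Your treatment of the second claim (the bound $B_2$ for small $\tilde\sigma$) is correct and essentially the paper's argument: decompose the one-step change of $V$ into the deterministic contributions of the $1/5$-rule plus the renormalization correction $-\log(\|m'\|_+/\|m\|_+)$, invoke Lemma~\ref{lemma:40-percent-success-rate} for the $2/5$ success rate, and show the correction vanishes as $\tilde\sigma\to 0$. The paper does that last step by a compactness/lower-semicontinuity argument over $M^+_0$ to extract a uniform $\tilde\sigma^*$; your explicit chain $\Expectation[\log(1+\tilde\sigma\|\xi\|_+)]\leq\tilde\sigma\,\Expectation\|\xi\|_+$ is already uniform in $\tilde m$ (since $\|m\|_+=1$ on $M^+_0$) and yields a concrete constant, which is if anything cleaner than the paper's route.

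The first claim is where your proposal has a genuine gap, and you say so yourself: everything hinges on $\Expectation\big[\log(\|x\|_+/\|m\|_+)\,\indicator\{\text{succ}\}\big]$ being bounded above uniformly in $\tilde\sigma$, and you assert this (``stays bounded'') backed only by a promissory note about polar coordinates. That step cannot be completed as sketched, because the quantity is \emph{not} uniformly bounded. Take $a=(-1,1)$ and $\tilde m=(0,1)$: for large $\tilde\sigma$ the success event is essentially $\{|\xi_2|\leq|\xi_1|\}$, whose probability converges to $p^*>0$ (your intermediate remark that the success probability shrinks like $O(1/\tilde\sigma)$ is incorrect), and on that event $\|x\|_+=|1+\tilde\sigma\xi_2|\approx\tilde\sigma|\xi_2|$, so the expectation grows like $p^*\log\tilde\sigma$. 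For comparison, the paper disposes of the regime $\tilde\sigma>\tilde\sigma^*$ in one line by declaring the worst case to be a success rate of zero, so that every step contributes exactly $-\frac14\log\alpha$; that shortcut tacitly assumes the success branch never contributes less than $-\frac14\log\alpha$ per unit of probability, which is precisely the point you were (rightly) uneasy about, since the renormalization by $\|m'\|_+$ can push $\tilde\sigma$ down sharply on a successful step when $\tilde\sigma$ is large. So the obstacle you identify is real, it is not closed by your sketch, and it cannot be closed by proving uniform boundedness of the log-ratio; a complete argument would have to restrict the range of $\tilde\sigma$, modify the potential, or justify the worst-case claim by some other means.
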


\begin{lemma}
\label{lemma:mean-drift}
Assume $\tilde m \in M^+_0$.
The $W$-progress $\Delta^W(\tilde m, \tilde \sigma)$ is everywhere
positive. Furthermore, for each
$\tilde \sigma^* \in (0, \tilde \sigma_{\text{40\%}}]$ there exists a
constant $C > 0$ depending on $\tilde \sigma^*$ such that it holds
$\Delta^W(\tilde m, \tilde \sigma) \geq C$ if
$\tilde \sigma \geq \tilde \sigma^*$.
\end{lemma}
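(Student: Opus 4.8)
The plan is to fix a state $(\tilde m, \tilde\sigma) \in M^+_0 \times \R_{>0}$, abbreviate $W = \|\tilde m\|_-$ (so $0 \le W \le 1$ on $M^+_0$), and describe the offspring $x \sim \Normal(\tilde m, \tilde\sigma^2 I)$ through $u = \|x\|_-$ and $v = \|x\|_+$, which are independent since $x$ has independent coordinates and $u$, $v$ involve disjoint coordinate blocks. A step is accepted iff $f_a(x) = v^2 - u^2 \le 1 - W^2 = f_a(\tilde m)$, i.e.\ iff $u \ge c(v) := \sqrt{(v^2 - 1 + W^2)^+}$, and then $W' = u/v$, while $W' = W$ on rejection. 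Conditioning on $v$ gives $\Delta^W = \Expectation_v[g(v)]$ with $g(v) = \Expectation_u[\indicator\{u \ge c(v)\}\min\{u/v - W, 1\}]$. A short computation (using $W \le 1$) shows $c(v) \ge Wv$ when $v \ge 1$ and $c(v) \le Wv$ when $v \le 1$.

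When $v \ge 1$, every accepted offspring has $u/v \ge c(v)/v \ge W$, so the integrand is nonnegative; keeping only the event $\{u \ge v(W+1)\}$, on which it equals $1$, yields $g(v) \ge \Pr(u \ge v(W+1)) > 0$. When $v < 1$, the threshold satisfies $c(v) \le Wv$, so on $\{u < c(v)\}$ the unrestricted integrand $\min\{u/v - W, 1\}$ is already $\le 0$; hence removing the indicator only decreases the value, and then, using $u/v \ge u$ and monotonicity of $t \mapsto \min\{t,1\}$, $g(v) \ge \Expectation_u[\min\{u/v - W, 1\}] \ge \Expectation_u[\min\{u - W, 1\}] = \Expectation[\min\{\|\tilde m_- + z_-\|_-, W+1\}] - W$, where $z$ is the mutation vector. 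It remains to show $\Expectation[\min\{\|\tilde m_- + z_-\|_-, W+1\}] \ge W$: since $z_-$ has the same law as $-z_-$, the left-hand side equals $\tfrac12\Expectation[\min\{\|\tilde m_- + z_-\|_-, W+1\} + \min\{\|\tilde m_- - z_-\|_-, W+1\}]$, the triangle inequality for $\|\cdot\|_-$ gives $\|\tilde m_- + z_-\|_- + \|\tilde m_- - z_-\|_- \ge \|2\tilde m_-\|_- = 2W$, and one checks the elementary fact that $a, b \ge 0$, $a + b \ge 2W$ and $W \le 1$ force $\min\{a, W+1\} + \min\{b, W+1\} \ge 2W$. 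Hence $g(v) \ge 0$ for $v < 1$ as well. Combining, $\Delta^W = \Expectation_v[g(v)] \ge \Expectation_v[g(v)\indicator\{1 \le v \le 2\}] \ge \Pr(1 \le v \le 2)\cdot\inf_{v \in [1,2]} g(v) > 0$, since both factors are positive ($\{1 \le v \le 2\}$ has positive probability because $\|\tilde m_+\|_+ = 1$, and $g(v) \ge \Pr(u \ge 2(W+1)) > 0$ on $[1,2]$). This proves everywhere positivity.

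For the uniform bound under $\tilde\sigma \ge \tilde\sigma^*$ I would use compactness. The integrand defining $\Delta^W$ is bounded in modulus by $1$ and is jointly continuous in $(\tilde m, \tilde\sigma, x)$ off the Lebesgue-null surface $\{f_a(x) = f_a(\tilde m)\}$, so dominated convergence makes $\Delta^W$ continuous on $M^+_0 \times \R_{>0}$. By scale invariance, writing $x = \tilde\sigma\xi$ with $\xi \sim \Normal(\tilde m/\tilde\sigma, I)$, the acceptance condition turns into $f_a(\xi) \le f_a(\tilde m)/\tilde\sigma^2 \to 0$ as $\tilde\sigma \to \infty$, and $W' = \|\xi\|_-/\|\xi\|_+$; since $\tilde m$ stays bounded on $M^+_0$, $\Delta^W$ extends continuously to $\tilde\sigma = \infty$ with value $\Expectation_{\xi \sim \Normal(0,I)}[\indicator\{\xi \in D_a^-\}\min\{\|\xi\|_-/\|\xi\|_+ - W, 1\}]$, which is strictly positive because $\|\xi\|_-/\|\xi\|_+ > 1 \ge W$ on $D_a^-$. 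Thus $\Delta^W$ is a continuous, everywhere positive function on the compact set $M^+_0 \times [\tilde\sigma^*, \infty]$ and therefore attains a minimum $C > 0$ there.

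The step I expect to be the main obstacle is pointwise positivity in the regime $v = \|x\|_+ < \|\tilde m\|_+$, where accepted offspring really can shrink $W$, so there is something genuine to prove; the argument hinges on the antithetic-coupling inequality $\Expectation[\min\{\|\tilde m_- + z_-\|_-, W+1\}] \ge W$, in which the constraint $\|\tilde m\|_- \le 1$ — i.e.\ membership in the closed set $M^+_0$, which is exactly why the zero level set was included — enters essentially. The remainder is routine care: justifying continuity of $\Delta^W$ and the $\tilde\sigma \to \infty$ limit across the measure-zero boundary of the acceptance region.
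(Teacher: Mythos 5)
Your proof is correct, and its core mechanism is the same as the paper's --- reflect the offspring's $-$-component through $\tilde m_-$ and apply the triangle inequality of the semi-norm $\|\cdot\|_-$, then obtain the uniform bound from continuity plus compactness of $M^+_0 \times [\tilde \sigma^*, \infty]$ --- but your bookkeeping is genuinely different and in places cleaner. The paper decomposes the sampling distribution into spheres $\|x - \tilde m\| = r$ and, within each sphere, pairs each successful point $z$ of the ``bad'' set $Z$ with its mirror image $z'$ (given by $z'_- = 2 \tilde m_- - z_-$, $z'_+ = z_+$); this requires arguing that the pairing preserves measure, that $\|z\|_+ \leq 1$ for successful $z \in Z$, and that every successful point of $Z$ has a successful partner outside $Z$. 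You instead condition on $v = \|x\|_+$, exploiting independence of $\|x\|_-$ and $\|x\|_+$, and split into $v \geq 1$ (where $c(v) \geq Wv$ makes every accepted offspring increase $W$, so positivity is immediate from the tail event $u \geq v(W+1)$) and $v < 1$ (where $c(v) \leq Wv$ lets you drop the acceptance indicator because rejected points would only contribute nonpositively, after which the antithetic coupling $z_- \leftrightarrow -z_-$ --- the same reflection the paper uses --- together with the triangle inequality and the elementary fact that $a, b \geq 0$, $a + b \geq 2W$, $W \leq 1$ imply $\min\{a, W+1\} + \min\{b, W+1\} \geq 2W$ gives nonnegativity). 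This sidesteps the paper's measure-correspondence claims entirely and makes explicit both where the truncation matters and where the hypothesis $\|\tilde m\|_- \leq 1$ (i.e.\ membership in the closed set $M^+_0$) enters. Your $\tilde \sigma \to \infty$ limit is also stated more precisely than the paper's: the limit is $\Expectation\big[\indicator\{\xi \in D_a^-\} \min\{\|\xi\|_-/\|\xi\|_+ - W, 1\}\big]$ rather than exactly $p^*$, but strict positivity is all that the compactness argument needs. The items you defer as routine --- joint continuity of $\Delta^W$ across the null boundary of the acceptance region and uniformity of the $\tilde \sigma \to \infty$ limit over $M^+_0$ --- are exactly the points the paper itself leaves implicit, so no gap is introduced.
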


The proofs of these lemmas contain the main technical work.


\begin{proof}[of Lemma~\ref{lemma:step-size-drift}]
From the definition of $\tilde \sigma_{\text{40\%}}$,
for $\tilde \sigma \leq \tilde \sigma_{\text{40\%}}$,
we conclude that the probability of sampling a successful offspring is
at least $2/5$.
In case of an unsuccessful offspring, $\tilde \sigma$ shrinks by the
factor $\alpha^{-1/4}$. For a successful offspring it is multiplied by
$\alpha \cdot \frac{\|m\|_+}{\|m'\|_+}$, where the factor $\alpha > 1$
comes from step size adaptation, and the fraction is due to the
definition of the normalized state.

The dependency on $m$ and $m'$ is inconvenient. However, for small step
size $\tilde \sigma$ we have $\|m'\| \approx \|m\|$, simply because
modifying $m$ with a small step results in a similar offspring, which is
then accepted as the new mean $m'$. In the limit we have
$$ \lim_{\tilde \sigma \to 0} \Expectation \left[ \log \left( \frac{\|m\|_+}{\|m'\|_+} \right) \right] = 0 \, . $$
This allows us to apply the same technique as in the proof of
Lemma~\ref{lemma:40-percent-success-rate}. The function
$(\tilde m, \tilde \sigma) \mapsto \Expectation \left[ \log \left( \frac{\|m\|_+}{\|m'\|_+} \right) \right]$
is continuous. We define a pointwise lower bound through the lower
semi-continuous function
$$ \tilde m \mapsto \arg\min_{0 < \tilde \sigma \leq \tilde \sigma_{\text{40\%}}} \left\{ \Expectation \left[ \log \left( \frac{\|m\|_+}{\|m'\|_+} \right) \right] \leq \frac{1}{\sqrt{\alpha}} \right\} \, , $$
where the $\arg\min$ over the empty set shall take the value
$\sigma_{\text{40\%}}$. We define $\tilde \sigma^*$ as its infimum. It
is attained, since $M^+_0$ is compact, and hence positive.

For $\tilde \sigma \leq \tilde \sigma^*$ we obtain the
following drift condition:
\begin{align*}
	\Delta^V(\tilde m, \tilde \sigma)
		& \geq \frac25 \cdot \left[ \log(\alpha^{-\frac12}) + \log(\alpha) \right] - \left(1 - \frac25\right) \cdot \frac14 \cdot \log(\alpha)
		\\
		& = \frac15 \cdot \log(\alpha) - \frac{3}{20} \cdot \log(\alpha)
		= \frac{1}{20} \cdot \log(\alpha) > 0
\end{align*}
For $\tilde \sigma > \tilde \sigma^*$ we consider the worst case of a
success rate of zero. Then we obtain
$$ \Delta^V(\tilde m, \tilde \sigma) \geq -\frac14 \cdot \log(\alpha) \, . $$
Hence, the statement holds with $B_1 = -\frac14 \cdot \log(\alpha)$ and
$B_2 = \frac{1}{20} \cdot \log(\alpha)$.
\qed
\end{proof}


\begin{proof}[of Lemma~\ref{lemma:mean-drift}]
We start by showing that $\Delta^W$ is always positive. We decompose the
domain of the sampling distribution (which is all of $\R^d$) into
spheres of fixed radius $r = \|\tilde m' - \tilde m\|$ and show that the
property holds, conditioned to the success region within each sphere.
Within each sphere, the distribution is uniform.

\begin{figure}
\begin{center}
	\includegraphics[width=\textwidth]{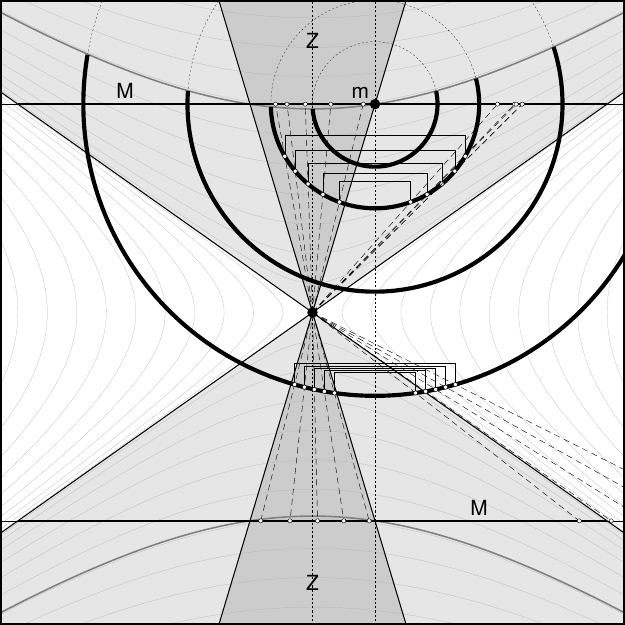}
	\caption{
		Geometric illustration of the proof of
		Lemma~\ref{lemma:mean-drift}. The figure shows the saddle
		point (center), level sets of $f_a$ (thin lines), the point $m$,
		and the set $M$ (two horizontal lines), the thick part of which
		is $M^+_0$. The area $D_a^-$ has a white background, while
		$D_a^+$ is the gray area. The dark gray area is the set $Z$.
		\\[0.25em]
		The figure displays spheres of different radii into which the
		sampling distribution is decomposed. The spheres are drawn as
		dotted circles, and as bold solid arcs in the region of
		successful offspring, outperforming $m$. The thickened arcs
		indicate sets of corresponding points. Ten pairs of
		corresponding points are shown, five each for two different
		spheres.
		\label{figure:proof-mean}
	}
\end{center}
\end{figure}

Each sphere makes positive and negative contributions to
$W(x) - W(\tilde m) = \frac{\|x\|_-}{\|x\|_+} - \|\tilde m\|_-$. Within
the set
$$ Z = \left\{z \in \R^d \,\left|\, \frac{\|z\|_-}{\|z\|_+} < \|\tilde m\|_- \right.\right\} $$
the contributions are negative. The set is illustrated in
Figure~\ref{figure:proof-mean}. Outside of $Z$, contributions are
positive. We aim to show that overall, for each sphere, the expectation
is positive. To this end, we define pairs of corresponding points such
that the negative contribution of one point is (more than) compensated
by the positive contribution of the other.
For our argument, it is important that the Lebesgue measure of each
subset $S \subset Z$ is at most as large the Lebesgue measure of the
set of corresponding points outside of $Z$. This property will be
fulfilled by construction, and with equality.

For each successful offspring in $Z$ we define a corresponding point
outside of $Z$ on the same sphere. Corresponding points are mirrored at
the symmetry axis through $m$. More precisely, for $z \in Z$ we define
$z'_- = 2 m_- - z_-$ and $z'_+ = z_+$. It holds
$\tilde m_- - z_- = z'_- - \tilde m_-$, and we call this difference
$\delta = \tilde m_- - z_-$.

By projecting both points onto the normalized state space $M$ we obtain
their contributions to the expectation. This amounts to following the
dashed lines in Figure~\ref{figure:proof-mean}.
Adding the contributions of $z$ and $z'$ yields
\begin{align*}
	W(z) + W(z') - 2 W(\tilde m)
		&= \frac{\|z\|_-}{\|z\|_+} + \frac{\|z'\|_-}{\|z'\|_+} - 2 \|\tilde m\|_- \\
		&= \frac{\|\tilde m - \delta\|_- + \|\tilde m + \delta\|_-}{\|z\|_+} - 2 \|\tilde m\|_- \\
		&\geq \|\tilde m - \delta\|_- + \|\tilde m + \delta\|_- - 2 \|\tilde m\|_-
		\geq 0 \, .
\end{align*}
The first inequality holds because of $\|z\|_+ = \|z'\|_+ \leq 1$ (note
that the set $M$ corresponds to $\|\cdot\|_+ = 1$, see also
Figure~\ref{figure:proof-mean}). The second step is the triangle
inequality of the semi-norm $\|\cdot\|_-$. Both inequalities are strict
outside of a set of measure zero.

Truncating progress $W(z') - W(\tilde m)$ larger than one does not pose
a problem. This is because $W(\tilde m) - W(z) < 1$ is obtained from the
fact that $\tilde m$ and $z$ are both contained in $D_a^+$, and this is
where $W$ takes values in the range $[0, 1)$. We obtain
$W(z) - W(\tilde m) + 1 > 0$ in the truncated case.

Integrating the sum over all corresponding pairs on the sphere, and
noting that there are successful points outside of $Z$ which do not have
a successful corresponding point inside but not the other way round, we
see that the expectation of $W(\tilde m') - W(\tilde m)$ over the
success region of each sphere is positive.

Integration over all radii $r > 0$ completes the construction. In the
integration, the weights of different values of $r$ depend on
$\tilde \sigma$ (by means of the pdf of a $\chi$-distribution scaled by
$\tilde \sigma$). Since the integrand is non-negative, we conclude that
$\Delta^W(\tilde m, \tilde \sigma) > 0$ holds for all
$\tilde \sigma > 0$.

In the limit $\tilde \sigma \to \infty$, the expected progress in case
of success converges to one (due to truncation), and hence the expected
progress converges to $p^*$. This allows us to exploit compactness once
more. The expectation of the truncated progress
$\Delta^W(\tilde m, \tilde \sigma)$ is continuous as a function of the
normalized state. We define a pointwise lower bound as
$$ C(\tilde m) = \min_{\tilde \sigma \geq \tilde \sigma_{\text{40\%}}} \Big\{ \Delta^W(\tilde m, \tilde \sigma) \Big\} \, . $$
$C(\tilde m)$ is a continuous function, and (under slight misuse of
notation) we define $C$ as its infimum over the compact set $M^+_0$.
Since the infimum is attained, it is positive.
\qed
\end{proof}


Now we are in the position to prove the theorem.
\begin{proof}[of Theorem~\ref{theorem:saddle}]
Combining the statements of Lemma \ref{lemma:step-size-drift} and \ref{lemma:mean-drift}
we obtain
$$\Delta^\Phi(\tilde m, \tilde \sigma) \geq \theta := \min\{\beta B_2, C + \beta B_1\} \, $$
for all $\tilde m \in M^+_0$ and $\tilde \sigma > 0$. The choice
$\beta = \frac{-C}{2 B_1}$ results in $\theta = \min\{B_2, C/2\} > 0$.
The constant $\theta$ is a bound on the additive drift of $\Phi$, hence
we can apply additive drift with tail bound (e.g., Theorem~2 in
\cite{lehre2013general} with additive drift as a special case, or
alternatively inequality (2.9) in Theorem 2.3 in
\cite{hajek1982hitting}) to obtain the following: Let
$$ T = \min\Big\{t \in \N \,\Big|\, \Phi(\tilde m_t, \tilde \sigma_t) > 1\Big\} $$
denote the waiting time for the event that $\Phi$ reaches or exceeds
one (called the first hitting time). Then the probability of $T$
exceeding $T_0 \in \N$ decays exponentially in $T_0$. Therefore, with
full probability, the hitting time $T$ is finite.
$\Phi(\tilde m_T, \tilde \sigma_T) > 1$ is equivalent to $f(m_T) < 0$.
For all $t > T$, the function value stays negative, due to elitism.
\qed
\end{proof}

\section{Discussion and Conclusion}

We have established that the (1+1)-ES does not get stuck at a
(quadratic) saddle point, irrespective of its conditioning (spectrum of
its Hessian), with full probability. This is all but a trivial result
since the algorithm is suspectable to premature convergence if the
success rate is smaller than $1/5$. For badly conditioned problems,
close to the saddle point, the success rate can indeed be arbitrarily
low. Yet, the algorithm passes the saddle point by avoiding it
``sideways'': While approaching the level set containing the saddle
point, there is a systematic sidewards drift away from the saddle. This
keeps the step size from decaying to zero, and the saddle is
circumvented.

In this work we are only concerned with quadratic functions. Realistic
objective functions to be tackled by evolution strategies are hardly
ever so simple. Yet, we believe that our analysis is of quite general
value. The reason is that the negative case, namely premature
convergence to a saddle point, is an inherently local process, which is
dominated by a local approximation like the second order Taylor
polynomial around the saddle point. Our analysis makes clear that as
long as the saddle is well described by a second order Taylor
approximation with a full-rank Hessian matrix, then the (1+1)-ES will
not converge prematurely to the saddle point. We believe that our result
covers the most common types of saddle points. Notable exceptions are
sharp ridges, plateaus, and Monkey saddles.

The main limitation of this work is not the covered class of functions,
but the covered algorithms. The analysis sticks closely to the (1+1)-ES
with its success-bases step size adaptation mechanism. There is no
reason to believe that a fully fledged algorithm like the covariance
matrix adaptation evolution strategy (CMA-ES) \cite{hansen:2001} would
face more problems with a saddle than the simple (1+1)-ES, and to the
best of our knowledge, there is no empirical indication thereof. In
fact, our intuition is that most algorithms should profit from the
sidewards drift, as long as they manage to break the symmetry of the
problem, e.g., through randomized sampling. Yet, it should be noted that
our analysis does not easily extend to non-elitist algorithms and step
size adaptation methods other than success-based rules.

{\small

}

\end{document}